\documentclass{article}
\usepackage{ijcai26}

\usepackage{times}
\usepackage{soul}
\usepackage{url}
\usepackage[hidelinks]{hyperref}
\usepackage[utf8]{inputenc}
\usepackage[small]{caption}
\usepackage{graphicx}
\usepackage{amsmath}
\usepackage{amsthm}
\usepackage{booktabs}
\usepackage{algorithm}
\usepackage{algorithmic}
\usepackage[switch]{lineno}

\usepackage{graphicx} 
\usepackage{amssymb}

\usepackage{subcaption}
\usepackage{tabularx}
\usepackage{multirow}

\usepackage{xcolor}
\usepackage{ulem}
\usepackage{array}

\newtheorem{theorem}{Theorem}

\title{E-TCAV: Formalizing Penultimate Proxies for Efficient Concept Based Interpretability}

\author{
Hasib Aslam$^{1,*\dagger}$,
Muhammad Ali Chattha$^{2,*}$,
Muhammad Taha Mukhtar$^{1,2}$,
Muhammad Imran Malik$^1$,
Andreas Dengel$^2$,
Sheraz Ahmed$^2$
\\
\affiliations
$^1$National University of Sciences and Technology (NUST)\\
$^2$German Research Centre for Artificial Intelligence (DFKI)\\
\emails
hasibaslam4152@gmail.com,
muhammad\_ali.chattha@dfki.de,
mmukhtar.bscs21seecs@seecs.edu.pk,
malik.imran@seecs.edu.pk,
andreas.dengel@dfki.de,
sheraz.ahmed@dfki.de
\\
$^*$Equal contribution.
$^\dagger$Corresponding author.
}

\begin{document}

\maketitle
\section{Abstract}
TCAV (Testing with Concept Activation Vectors) is an interpretability method that assesses the alignment between the internal representations of a trained neural network and human-understandable, high-level concepts.
Though effective, TCAV suffers from significant computational overhead, inter-layer disagreement of TCAV scores, and statistical instability.
This work takes a step toward addressing these challenges by introducing E-TCAV, a framework for efficient approximation of TCAV scores, which is based on extensive investigation into three key aspects of the TCAV methodology: 1) the effect of latent classifiers on the stability of TCAV scores, 2) the inter-layer agreement of TCAV scores, and 3) the use of the penultimate layer as a fast proxy for earlier layers for TCAV computation.
To ensure a solid foundation for E-TCAV, we conduct extensive evaluations across four different architectures and five datasets, encompassing problems from both computer vision and natural language domains.
Our results show that the layers in the final block of the neural network strongly agree with the penultimate layer in terms of the TCAV scores, and the commonly observed variance of the TCAV scores can be attributed to the choice of the latent classifier.
Leveraging this inter-layer agreement and the degeneracy of directional sensitivities at the penultimate layer, E-TCAV guarantees linearly scaling speed-ups with respect to the 
network's size and the number of evaluation samples, marking a step towards efficient model debugging and real-time concept-guided training.\footnote{Code is available at: \url{https://github.com/hasib2003/E_TCAV}}

\begin{figure}[t!]
    \centering
    \includegraphics[width=1\linewidth]{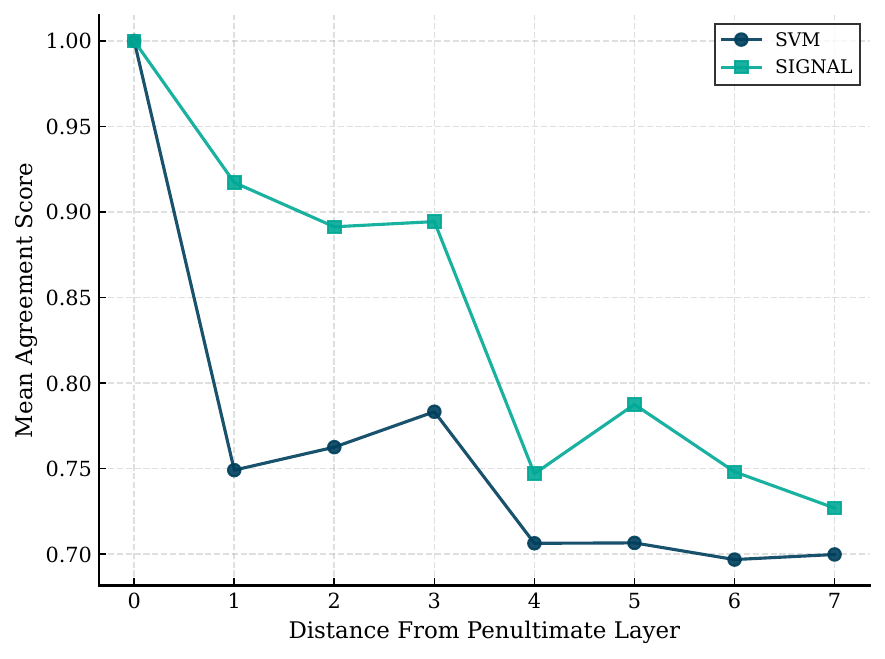}
    \caption{Behavior of TCAV scores at different layers. The x-axis is the distance of the layer from the penultimate layer; the y-axis represents the average agreement of TCAV scores between the target and penultimate layer. The results are averaged across five distinct datasets and four different neural network architectures.}
    \label{fig:alignment_summary}
\end{figure}
\section{Introduction}
Testing with concept activation vectors (TCAV)~\cite{kim2018interpretability}, is a widely used technique for post-hoc interpretability of deep neural networks. The main idea of TCAV is to quantify the effect of human-understandable concepts, commonly represented as vectors, on the model's prediction by measuring how prediction confidence changes as latent representations of inputs align more closely with those concept vectors.
Despite the TCAV's ability to bridge the black box neural networks to high-level human-understandable concepts, multiple challenges complicate its applicability, including its computational complexity and variability across layers. 

The computation of the TCAV score as proposed in~\cite{kim2018interpretability} is a two-step procedure.
In the first step, a concept vector is calculated by training a linear binary classifier to separate the latent representation of concept samples (typically images or text) from those of non-concept samples at a specific layer of the model. 
In the second step, the TCAV score is then computed as the fraction of input samples for which the model’s prediction confidence for a given class increases when the latent representation is perturbed in the direction of the concept vector, capturing the model’s directional sensitivity to the concept.
This two-step TCAV procedure incurs significant computational complexity, especially when TCAV is integrated into the training process rather than used as a post-hoc interpretability tool, as in TCAV-based guided training approaches~\cite{dreyer2024hope}.
It is important to note that while FastCAV~\cite{schmalwasser2025fastcav} improves the efficiency of computing the concept activation vector, the second step of TCAV, calculating directional sensitivities, remains computationally expensive.

Another significant challenge of TCAV is the variability of the scores both across different CAV runs and across network layers. This is reflected in the statistical insignificance of TCAV scores and the lack of agreement across different layers regarding the presence of a concept. As a result, a concept may appear to be present at certain layers while being absent at others, making it difficult for experts to draw clear and decisive conclusions. 

In this study, we work towards addressing these limitations.
We first demonstrate that the commonly observed variability of TCAV scores, as reported in~\cite{nicolson2024explaining}, can be attributed to the choice of latent classifiers and show a substantial reduction in score variance, compared to the commonly used SVM, by adopting SignalCAV \cite{pahde2022navigating} as the latent classifier.
Extensive experiments conducted across five benchmark datasets and four neural network architectures show that this choice consistently yields low-variance TCAV scores and significantly enhances their statistical significance.

We further show that using SignalCAV as a latent classifier results in more consistent TCAV scores among layers in the final block of the neural network.
The semantic representations become increasingly similar towards the end of the neural network, resulting in strong inter-layer agreement between the TCAV scores computed at the penultimate layer and those from earlier layers, which decrease gradually as we go deeper in the network.
A key distinction from a recent work by~\cite{nicolson2024explaining} is that we focus on TCAV agreement between the penultimate layer and preceding layers, rather than on general inter-layer agreement between arbitrary layer pairs.

Analytically, we prove that under commonly occurring conditions, the directional sensitivity at the penultimate layer does not depend on the input samples, and the iterative computations are reduced to a single dot product.
Leveraging this analytical observation and inter-layer TCAV agreement, we propose E-TCAV, an approximation framework for efficient calculation of TCAV scores. To the best of our knowledge, we are the first ones to formally present these observations to the community.

The main contributions of this research can be summarized as follows:
\begin{itemize}

    \item Through extensive experiments on four architectures and five benchmark datasets, we show that SignalCAV induces strong correlations between TCAV scores from the penultimate layer and those from earlier layers.
    \item We introduce E-TCAV, an approximation framework for TCAV scores that guarantees linearly scaling speed-ups with respect to network size and number of evaluation samples.  
    \item We show that the variance of the TCAV scores can be attributed to the choice of the latent classifier, and the use of Signal CAV reduces the standard deviation and enhances statistical significance.
        
\end{itemize}

\section{Preliminaries}
\label{sec:preliminaries}

Consider an arbitrary deep neural network-based classifier, without the loss of generality, we can define it as $h \circ f$ where $f$ is interpreted as features extractor and $h$ as a classifier, giving the logits for different classes. It is important to note that $h$ and $f$ represent an arbitrary division of the neural network and may not necessarily correspond to the traditional feature extractor and classifiers. In general 
\begin{align}
f &: \mathbb{R}^{d_1 \times d_2} \to \mathbb{R}^m \\
h &: \mathbb{R}^m \to \mathbb{R}^c
\end{align}
where $d_1, d_2$ are arbitrary input dimensions, $m$ is the dimension of the latent space and $c$ is the number of classes. We use $h_k$ to denote the function that outputs the logit for the $k^{th}$ class only.

Let $\boldsymbol{v}_C \in \mathbb{R}^m$ be the concept vector for some particular concept $C$, following~\cite{kim2018interpretability}, the "true conceptual sensitivity" of class $k$ to the concept $C$ can be computed as the directional derivative $S_{C,k}$ defined as: 
\begin{equation}
S_{C,k}(\boldsymbol{x})
\;=\;
\nabla_{\boldsymbol{x}} h_k(\boldsymbol{x}) \cdot\ \boldsymbol{v}_C
\label{def:directional-derivative}
\end{equation}

The TCAV score is then computed as the fraction of samples in class
$\boldsymbol{X}_k$ whose directional derivative along the concept
vector $\boldsymbol{v}_C$ is positive:
\begin{equation}
\mathrm{TCAV}_{C,k} = 
\frac{ \bigl| \{ \boldsymbol{x} \in \boldsymbol{X}_k \;|\; S_{C,k}(\boldsymbol{x}) > 0 \} \bigr| }
     { |\boldsymbol{X}_k| }
\label{def:TCAV}
\end{equation}
For the rest of the sections, we follow the notation of making vectors lower-case and bold, matrices are upper-case and bold, while scalars are lower-case and unbold.

When performing the TCAV testing, for each concept we extracted 30 different concept activations (unless stated otherwise) and performed a two-sided t-test with alpha of $0.05$ to ensure the statistical significance. The statistically insignificant results are labeled with a \textcolor{red}{*} symbol. 

\subsection{SignalCAV - Latent Classifier}
Testing with concept activation vectors, as introduced in~\cite{kim2018interpretability}, uses support vector machines (SVM) with linear kernels to find the decision boundary separating the latent representation of concepts and non-concept samples. \footnote{We refer to these classifiers as \textit{latent classifiers} to distinguish from the classifier $g$ defined in section \autoref{sec:preliminaries}}
Although multiple latent classifier such as Lasso, Ridge, FastCAV~\cite{schmalwasser2025fastcav}, or SignalCAV~\cite{pahde2022navigating}, can be used, we use the SignalCAV as it is invariant to the feature scaling and is more robust to noise~\cite{pahde2022navigating}.

Given a dataset $X = \{(\mathbf{h}, t)\}$ consisting of latent activations $\mathbf{h}$ and their corresponding binary labels $t$, the Signal CAV, denoted by $\mathbf{v}_{\mathrm{sig}}$, is defined as
\begin{equation}
\mathbf{v}_{\mathrm{sig}} =
\frac{1}{\sigma_t^2 \lvert X \rvert}
\sum_{(\mathbf{h}, t) \in X}
(\mathbf{h} - \bar{\mathbf{h}})(t - \bar{t}),
\end{equation}
where $\bar{\mathbf{h}}$ and $\bar{t}$ denote the empirical means of $\mathbf{h}$ and $t$, respectively, and $\sigma_t^2$ is the variance of $t$.

\section{The Affine Degeneracy}
\label{sec:theorem-fast-proxy}
In this section, we demonstrate that the conceptual sensitivity and the resulting TCAV score collapse to a sample-independent constant when the classifier $h$ is an affine mapping. This observation is critical to the setup of E-TCAV.

\begin{theorem}[Degeneracy of TCAV for Affine Classifiers]
\label{theorem:fast-penul}
Let $h_k : \mathbb{R}^m \to \mathbb{R}$ be an affine classifier for class $k$, defined as
$
h_k(\boldsymbol{a}) = \boldsymbol{w}_k^\top \boldsymbol{a} + b_k,
$
where $\boldsymbol{a} \in \mathbb{R}^m$ represents the activations from a bottleneck layer,
$\boldsymbol{w}_k \in \mathbb{R}^m$ is the weight vector, and $b_k \in \mathbb{R}$ is the bias.
For any concept vector $\boldsymbol{v}_C \in \mathbb{R}^m$, the TCAV score
$\mathrm{TCAV}_{C,k}$ is invariant to the input distribution $X_k$ and is determined
solely by the inner product $\langle \boldsymbol{w}_k, \boldsymbol{v}_C \rangle$.
\end{theorem}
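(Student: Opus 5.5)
The plan is to compute the directional derivative in \eqref{def:directional-derivative} explicitly for the affine $h_k$. Then substitute it into the definition of the TCAV score \eqref{def:TCAV}. Here $h$ acts on the bottleneck activations $\boldsymbol{a} = f(\boldsymbol{x})$, so the gradient in \eqref{def:directional-derivative} is taken with respect to $\boldsymbol{a}$.

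First, I would differentiate $h_k(\boldsymbol{a}) = \boldsymbol{w}_k^\top \boldsymbol{a} + b_k$ with respect to $\boldsymbol{a}$. This gives $\nabla_{\boldsymbol{a}} h_k(\boldsymbol{a}) = \boldsymbol{w}_k$ for every $\boldsymbol{a}\in\mathbb{R}^m$; the bias contributes nothing. Consequently
\begin{equation*}
S_{C,k}(\boldsymbol{x}) = \nabla_{\boldsymbol{a}} h_k\bigl(f(\boldsymbol{x})\bigr)\cdot \boldsymbol{v}_C = \langle \boldsymbol{w}_k, \boldsymbol{v}_C\rangle ,
\end{equation*}
which is a constant independent of $\boldsymbol{x}$, and hence independent of $f$ and of the particular sample.

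Second, I would substitute this into \eqref{def:TCAV}. The indicator $\mathbb{1}[S_{C,k}(\boldsymbol{x})>0]$ is the same for every $\boldsymbol{x}\in\boldsymbol{X}_k$, so the set $\{\boldsymbol{x}\in\boldsymbol{X}_k : S_{C,k}(\boldsymbol{x})>0\}$ is either all of $\boldsymbol{X}_k$ or empty. This gives
\begin{equation*}
\mathrm{TCAV}_{C,k} =
\begin{cases}
1, & \langle \boldsymbol{w}_k, \boldsymbol{v}_C\rangle > 0,\\
0, & \langle \boldsymbol{w}_k, \boldsymbol{v}_C\rangle \le 0,
\end{cases}
\end{equation*}
for any nonempty $\boldsymbol{X}_k$. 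The score therefore does not depend on the input distribution and is determined solely by the sign of the inner product, which establishes the claim. I would note that the required nonemptiness is already implicit in \eqref{def:TCAV}, since it divides by $|\boldsymbol{X}_k|$.

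There is no real analytic obstacle here. The only point needing care is to state clearly that the derivative is taken in the bottleneck space $\mathbb{R}^m$ where $\boldsymbol{v}_C$ lives, as in \eqref{def:directional-derivative}, and not in input space. I would also remark that the degenerate case $\langle \boldsymbol{w}_k,\boldsymbol{v}_C\rangle=0$ yields score $0$ under the strict inequality of \eqref{def:TCAV}. Finally, I would observe the computational consequence: evaluating the score costs a single $m$-dimensional dot product instead of $|\boldsymbol{X}_k|$ gradient computations.
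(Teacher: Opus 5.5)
Your proposal is correct and follows essentially the same route as the paper. Both arguments differentiate the affine logit to get the constant gradient $\boldsymbol{w}_k$, observe that the sensitivity equals $\langle \boldsymbol{w}_k,\boldsymbol{v}_C\rangle$ for every sample, and conclude that $\mathrm{TCAV}_{C,k}=\mathbb{I}(\boldsymbol{w}_k\cdot\boldsymbol{v}_C>0)$; your remarks on nonempty $\boldsymbol{X}_k$ and on the zero case are small precisions the paper leaves implicit.
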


\begin{proof}
The conceptual sensitivity $S_{C,k}(\boldsymbol{a})$ is defined as the directional derivative of the logit $h_k$ along the concept vector $\boldsymbol{v}_C$:
\[
S_{C,k}(\boldsymbol{a}) = \nabla_{\boldsymbol{a}} h_k(\boldsymbol{a}) \cdot \boldsymbol{v}_C.
\]

Substituting the affine form of $h_k$:
\[
S_{C,k}(\boldsymbol{a}) = \nabla_{\boldsymbol{a}} (\boldsymbol{w}_k^\top \boldsymbol{a} + b_k) \cdot \boldsymbol{v}_C
= \boldsymbol{w}_k \cdot \boldsymbol{v}_C.
\]

The sensitivity $S_{C,k}(\boldsymbol{a})$ is constant for all $\boldsymbol{a} \in \mathbb{R}^m$. By the definition of the TCAV score as the fraction of inputs with positive sensitivity:
\[
\mathrm{TCAV}_{C,k} = 
\frac{ \bigl| \{ \boldsymbol{x} \in X_k : S_{C,k}(g(\boldsymbol{x})) > 0 \} \bigr| }{|X_k|}.
\]

Since $S_{C,k}$ is constant and independent of $\boldsymbol{x}$, the condition $S_{C,k} > 0$ is either true for all $\boldsymbol{x}$ or false for all $\boldsymbol{x}$. Thus:
\[
\mathrm{TCAV}_{C,k} = \mathbb{I}(\boldsymbol{w}_k \cdot \boldsymbol{v}_C > 0),
\]
where $\mathbb{I}(\cdot)$ is the indicator function.
\end{proof}
Though straightforward, the implications of~\autoref{theorem:fast-penul} are significant.
One, it ensures that the TCAV scores can be calculated much faster at a layer, if the subsequent layer(s) constitute an affine map, since it does require computation of directional derivatives with respect to each evaluation sample. 
This is usually the case for the average pool layer in the convolutional neural networks, such as in ResNet, Inception, and DenseNet architectures, where average pooling is followed by fully connected layers to obtain class logits. 
\footnote{Some networks apply dropout after average pooling, but it is done only while training, and during inference, it is just an identity operation.}

Secondly, if there exists strong agreement between penultimate layers, for which~\autoref{theorem:fast-penul} holds, and previous layers, we can directly approximate the scores at previous layers using the penultimate one. In the next sections, we define a new metric for quantifying the TCAV agreement between two layers, followed by a comprehensive evaluation on diverse 
datasets to check if this agreement exists.
\section{The TCAV Agreement Score}
\label{metric:agreement}
To capture the total logical consistency between layers, we define the TCAV agreement score $\bar{A}(l_1,l_2)$, which evaluates the degree to which two layers, $l$ and $l_p$, agree on both the presence and absence of conceptual significance across the entire library $\mathcal{C}$.

Let $\mathbb{I}(c, l,\alpha)$ be an indicator function such that $\mathbb{I}(c, l) = 1$ if the TCAV score $\mathrm{TCAV}_{c,l} > \alpha$, and $0$ otherwise. The alignment is defined as the normalized sum of agreements across all concepts:

\begin{align}
A_{l, l_p, \alpha}
&=
\frac{1}{|\mathcal{C}|}
\sum_{c \in \mathcal{C}}
\Bigl[
    \mathbb{I}(c, l, \alpha)\,
    \mathbb{I}(c, l_p, \alpha)
\nonumber \\
&\qquad\quad
    +
    \bigl(1 - \mathbb{I}(c, l, \alpha)\bigr)
    \bigl(1 - \mathbb{I}(c, l_p, \alpha)\bigr)
\Bigr]
\label{eq:conceptual_state_alignment}
\end{align}

As it can be observed that the TCAV agreement score as defined in~\autoref{eq:conceptual_state_alignment} is sensitive to the value of $\alpha$, therefore instead of using arbitrary thresholds, we compute ROC-AUC style scores by integrating agreement across all possible thresholds.

\begin{align}
\bar{A}(l, l_p)
&=
\int_{0}^{1} A(l, l_p, \alpha)\, d\alpha
\nonumber \\
&=
\frac{1}{|\mathcal{C}|}
\sum_{c \in \mathcal{C}}
\int_{0}^{1}
\Bigl[
    \mathbb{I}(c, l, \alpha)\,
    \mathbb{I}(c, l_p, \alpha)
\nonumber \\
&\qquad\quad
    +
    \bigl(1 - \mathbb{I}(c, l, \alpha)\bigr)
    \bigl(1 - \mathbb{I}(c, l_p, \alpha)\bigr)
\Bigr]
\label{eq:integrated_conceptual_state_alignment}
\end{align}

$\bar{A}(l, l_p)$ provides a holistic view of the model's conceptual symmetry between any two layers of interest.
It is high for the cases where both layers correspond to all concepts under consideration in the same fashion, and is low otherwise. 
In~\autoref{appendix:tas_derivation}, we show that the $\bar{A}(l, l_p)$ can be expressed more conveniently in terms of the difference of TCAV scores between two layers. Specifically:

\begin{equation}
\begin{aligned}
    \bar{A}(l, l_p)
     &= \frac{1}{|\mathcal{C}|} \sum_{c \in \mathcal{C}} 1 - \left \lvert \mathrm{TCAV}_{c,l} - \mathrm{TCAV}_{c,l_p} \right \lvert
\end{aligned}
\label{eq:integrated_conceptual_state_alignment}
\end{equation}

\subsection{Derivation: TCAV Agreement Score}
\label{appendix:tas_derivation}

To derive the closed form expression for $\bar{A}(l, l_p)$ as defined in \autoref{eq:integrated_conceptual_state_alignment}, let $T_{c,l}, T_{c,l_p} \in [0, 1]$ denote $\mathrm{TCAV}_{c,l}$ and $\mathrm{TCAV}_{c,l_p}$ respectively. The integrated agreement for a single concept $c \in \mathcal{C}$ is:
\begin{equation}
\begin{aligned}
    \bar{A}_c &= \int_{0}^{1} \Bigl[ \mathbb{I}(T_{c,l} > \alpha)\mathbb{I}(T_{c,l_p} > \alpha) \\
    &\quad + (1 - \mathbb{I}(T_{c,l} > \alpha))(1 - \mathbb{I}(T_{c,l_p} > \alpha)) \Bigr] d\alpha
\end{aligned}
\end{equation}
The first term in the integrand is $1$ if $\alpha < \min(T_{c,l}, T_{c,l_p})$ and $0$ otherwise. The second term is $1$ if $\alpha \geq \max(T_{c,l}, T_{c,l_p})$ and $0$ otherwise. Evaluating the integrals separately yields:
\begin{equation}
\begin{aligned}
    \bar{A}_c &= \int_{0}^{\min(T_{c,l}, T_{c,l_p})} 1 \, d\alpha + \int_{\max(T_{c,l}, T_{c,l_p})}^{1} 1 \, d\alpha \\
    &= \min(T_{c,l}, T_{c,l_p}) + \left( 1 - \max(T_{c,l}, T_{c,l_p}) \right) \\
    &= 1 - \left( \max(T_{c,l}, T_{c,l_p}) - \min(T_{c,l}, T_{c,l_p}) \right) \\
    &= 1 - |T_{c,l} - T_{c,l_p}|
\end{aligned}
\end{equation}
Substituting this result back into the normalized sum over the concept library $\mathcal{C}$ provides the final closed-form expression:
\begin{equation}
    \bar{A}(l, l_p) = \frac{1}{|\mathcal{C}|} \sum_{c \in \mathcal{C}} \left( 1 - | \mathrm{TCAV}_{c,l} - \mathrm{TCAV}_{c,l_p} | \right)
\end{equation}
This confirms that the integrated agreement is exactly the complement of the Mean Absolute Error (MAE) between the TCAV scores of two layers.
\section{The E-TCAV Framework}
\label{sec:etcav-def}
The degeneracy of TCAV scores, discussed in~\autoref{sec:theorem-fast-proxy}, motivates the E-TCAV framework.
E-TCAV is designed to enable meaningful estimation of conceptual alignment in the final block of a neural network without requiring explicit probing of each layer.
To this end, we propose using the penultimate layer as the default probing layer when the objective is to analyze only the final block of the network.
As shown in~\autoref{sec:Alignment of TCAV Scores}, the TCAV layer agreement score, defined in~\autoref{metric:agreement}, remains predominantly above $0.75$ up to four layers preceding the penultimate layer, rendering explicit probing of these layers redundant.
In practice, the E-TCAV framework is applied as follows:
\begin{enumerate}
    \item Employ SignalCAV as the latent classifier.
    \item When the layer of interest lies within the final five layers of the model, use the TCAV scores computed at the penultimate layer, following the procedure described in~\autoref{theorem:fast-penul}.
\end{enumerate}

\subsubsection{Complexity Analysis}
\label{sec:complexity-analysis}
To quantify the efficiency gains of E-TCAV, we provide a comparative complexity analysis of its performance with the original TCAV method.
Let $N$ denote the number of test samples in class $k$ ($|X_k|$), and $M$ denote the number of samples used to train the latent classifier to obtain the concept activation vector.
Let $D_l$ and $D_p$ represent the dimensionality of the activations at layer $l$ and the penultimate layer, respectively.
Two phases dominate the computational cost of standard TCAV at an arbitrary layer $l$:
\begin{enumerate}
    \item \textbf{CAV Training:} Training a linear classifier (e.g., SVM or Logistic Regression) on $M$ samples in $D_l$-dimensional space. This has a complexity of approximately $O(\mathcal{T}(M, D_l))$, where $\mathcal{T}$ is the solver's complexity.
    \item \textbf{Sensitivity Calculation:} This requires $N$ forward passes and $N$ partial backward passes to compute the gradient $\nabla_{a_l} h_k$. 
    Let $\mathcal{C}_{fp}(l)$ be the cost of the forward pass till layer $l$ and $\mathcal{C}_{bp}(l)$ be the cost of a backward pass to layer $l$. The total cost is $O(N \cdot (\mathcal{C}_{fp}(l) + \mathcal{C}_{bp}(l) + D_l))$.
\end{enumerate}
Thus, the total time complexity of the standard TCAV procedure, denoted by $\mathcal{Q_{TCAV}}$, is following:
\[
\mathcal{Q}_{\mathrm{TCAV}}
= O\!\left(
\mathcal{T}(M, D_l)
+ N \bigl(
\mathcal{C}_{\mathrm{fp}}(l)
+ \mathcal{C}_{\mathrm{bp}}(l)
+ D_l
\bigr)
\right)
\]
The E-TCAV, on the other hand, does not need any forward or backward pass on the evaluation samples $|X_k|$. Using~\autoref{theorem:fast-penul} we can replace the directional derivative with the dot product, whose complexity is $O(D_l)$. Therefore, the total time complexity of the E-TCAV procedure, denoted by $\mathcal{Q_{E-TCAV}}$, is:
\[
\mathcal{Q}_{\mathrm{E-TCAV}}
= O\Bigl( \mathcal{T}(M, D_l) + D_l \Bigr)
\]
The efficiency gain obtained by using E-TCAV depends on the layer whose TCAV score is approximated using the penultimate layer.
When the target layer is the penultimate layer itself, the reduction in time complexity scales with the number of evaluation samples $N$ and the network size, as determined by the forward- and backward-pass costs $C_{fp}(l_p)$ and $C_{bp}(l_p)$.

The efficiency gain is substantially larger when approximating TCAV scores for an earlier layer $l$.
In particular, when $D_l \ll D_p$, the difference in solver complexity $\mathcal{T}$ further contributes to the overall gain.

\section{Experiment and Results}
In this section, we provide evidence for the existence of the inter-layer TCAV agreements in the final block of the neural network, when SignalCAV, as described in the E-TCAV framework, is employed.
Our experiments span five benchmark datasets. Below, we first explain the key characteristics of these datasets, followed by the empirical analyses on fidelity, efficiency, and generalization of the proposed E-TCAV.
\subsection{Datasets}
\subsubsection{CelebA - Concepts With Ground Truth} 
\label{ds:celeba}
The first dataset used in this study is CelebA - Large-scale CelebFaces Attributes dataset~\cite{liu2015faceattributes}, a benchmark for facial analysis. It contains 40 binary facial attributes per image, such as Male, Young, Eyeglasses, Beard, Bald, etc.
The natural distribution of the dataset is such that it contains known confounders. For example, about $99.35\%$ of all people in the dataset with blonde hair don't wear a necktie. 
These correlations can be used to approximate ground truth results of the TCAV testing.
For example, a network trained on the hair color prediction task may learn the necktie feature as a shortcut, and therefore,e we can expect low TCAV scores of the necktie concept for the blonde class.

This information is particularly helpful for analyzing if the latent classifier has picked up the right concept direction or not, and is more reliable since the facial attributes are more complex and are better representative of common use case scenarios than general synthetic datasets.
\textit{See Appendix~\autoref{apx:celeba} for more details.}
\subsubsection{SCDB - Simple Concept Database}
SCDB is a large-scale synthetic dataset developed originally for concept localization~\cite{lucieri2020explaining} - inspired by the challenges of the skin lesion classification. The dataset images contain big geometric shapes containing multiple concepts. Each concept is represented as a smaller geometry inside the bigger one with random color, shape, and orientation.
The dataset images are divided into two classes - each class is a combination of multiple concepts. There are total of ten available concepts. 
We use SCDB because of the diverse nature of its concepts and the availability of the concept segmentation masks that allow precise concept definition. \textit{See Appendix~\autoref{apx:scdb} for more details.}
\subsubsection{ISIC - Skin Lesions}
We extend the scope of our study by using the dermatoscopic images from the International Skin Imaging Collaboration dataset~\cite{codella2017isbi,tschandl2018ham10000,hernandezperez2024bcn20000} (ISIC-2019) to train different networks for the binary classification of melanoma.
We then use the derm7pt dataset~\cite{kawahara2018seven}, which contains diverse concepts related to melanoma disease as probe datasets for TCAV, to find how different concepts affect the network's decision.
\textit{See Appendix~\autoref{apx:isic} for more details.}
\subsubsection{ImageNet-1K}
Since the complexity of the dataset and the number of classes dictate the geometry of the decision boundaries, it directly determines the nature of the gradients across layers.
To account for use cases beyond binary classification, we evaluate the behavior of TCAV scores on the ImageNet-1K dataset~\cite{deng2009imagenet}.
We assess the statistical significance of the \textbf{Stripes} concept, derived from the Broden dataset~\cite{bau2017network}, across multiple animal classes, including zebra, lion, leopard, and tiger.
\subsubsection{Jigsaw / Wikipedia Toxicity Dataset}  
To ensure the generalization of our findings beyond computer vision, we include the Wikipedia Toxicity dataset~\cite{davidson2017automated}, a comment-level text classification dataset built to study and detect toxic behavior in online discussions. 
In particular, we analyze the significance of \textbf{insult}, \textbf{obscene}, and \textbf{neutral} concepts. \textit{See Appendix~\autoref{apx:wiki-toxic} for more details.}

\subsection{Evaluations}
\begin{table}[t!]
\centering
\small
\begin{tabularx}{\linewidth}{Xccc}
\toprule
Layer & Signal & SVM & $\Delta$ \\
\midrule
\multicolumn{4}{l}{\textbf{resnet50}} \\
\midrule
layer3.5 & 1.000 $\pm$ 0.000 & 1.000 $\pm$ 0.000 & +0.000 \\
layer4.0 & 1.000 $\pm$ 0.000 & 0.960 $\pm$ 0.136 & \textcolor{green!40!black}{+0.040} \\
layer4.1 & 1.000 $\pm$ 0.000 & 1.000 $\pm$ 0.000 & \textcolor{green!40!black}{+0.000} \\
layer4.2 & 1.000 $\pm$ 0.000 & 0.967 $\pm$ 0.183 & \textcolor{green!40!black}{+0.033} \\
avgpool & 1.000 $\pm$ 0.000 & 1.000 $\pm$ 0.000 & +0.000 \\
\midrule
\multicolumn{4}{l}{\textbf{densenet121}} \\
\midrule
denselayer13 & 1.000 $\pm$ 0.000 & 0.941 $\pm$ 0.146 & \textcolor{green!40!black}{+0.059} \\
denselayer14 & 1.000 $\pm$ 0.000 & 0.988 $\pm$ 0.053 & \textcolor{green!40!black}{+0.012} \\
denselayer15 & 1.000 $\pm$ 0.000 & 0.521 $\pm$ 0.401\rlap{\textcolor{red}{*}} & \textcolor{green!40!black}{+0.479} \\
denselayer16 & 1.000 $\pm$ 0.000 & 0.452 $\pm$ 0.340\rlap{\textcolor{red}{*}} & \textcolor{green!40!black}{+0.548} \\
avgpool & 1.000 $\pm$ 0.000 & 1.000 $\pm$ 0.000 & +0.000 \\
\midrule
\multicolumn{4}{l}{\textbf{inception\_v3}} \\
\midrule
Mixed\_6e & 1.000 $\pm$ 0.000 & 0.871 $\pm$ 0.172 & \textcolor{green!40!black}{+0.129} \\
Mixed\_7a & 1.000 $\pm$ 0.000 & 0.917 $\pm$ 0.165 & \textcolor{green!40!black}{+0.083} \\
Mixed\_7b & 1.000 $\pm$ 0.000 & 0.875 $\pm$ 0.186 & \textcolor{green!40!black}{+0.125} \\
Mixed\_7c & 1.000 $\pm$ 0.000 & 0.533 $\pm$ 0.507\rlap{\textcolor{red}{*}}& \textcolor{green!40!black}{+0.467} \\
avgpool & 1.000 $\pm$ 0.000 & 1.000 $\pm$ 0.000 & +0.000 \\
\bottomrule
\end{tabularx}
\caption{TCAV scores for the Necktie concept evaluated on the class of non-blonde hair. Values are shown as mean $\pm$ std. \textcolor{red}{*} indicates $p > 0.05$. All mentioned scores have accuracy $> 75\%$.}\label{tab:celeba-scores-non-blond}
\end{table}

\begin{table}[t!]
\centering
\small
\begin{tabularx}{\linewidth}{>{\raggedright\arraybackslash}p{2cm} 
                            >{\centering\arraybackslash}X
                            >{\centering\arraybackslash}X
                            >{\centering\arraybackslash}X}
\toprule
Layer & \multicolumn{3}{c}{Wiki} \\
\cmidrule(lr){2-4}
  & Signal & SVM & $\Delta$ \\
\midrule
\multicolumn{3}{l}{\textbf{roberta-base}} \\
\midrule
layer.11 & \textbf{1.000} & 0.558 & \textcolor{green!40!black}{+0.442} \\
layer.10 & \textbf{1.000} & 0.697 & \textcolor{green!40!black}{+0.303} \\
layer.9 & \textbf{1.000} & 0.440 & \textcolor{green!40!black}{+0.560} \\
layer.8 & \textbf{1.000} & 0.490 & \textcolor{green!40!black}{+0.510} \\
layer.7 & \textbf{1.000} & 0.474 & \textcolor{green!40!black}{+0.526} \\
\midrule
\end{tabularx}
\caption{Layer-wise agreement scores for Signal and SVM classifiers. Signal and SVM results are shown side-by-side for each dataset to enable direct comparison. Values greater than 0.75 are highlighted in bold.}
\label{tab:layer-agreement-nlp}
\end{table}
\begin{table*}[t!]
\centering
\small
\begin{tabularx}{\textwidth}{l *{12}{>{\centering\arraybackslash}X}}
\toprule
Layer & \multicolumn{3}{c}{CelebA} & \multicolumn{3}{c}{ISIC} & \multicolumn{3}{c}{SCDB} & \multicolumn{3}{c}{ImageNet} \\
\cmidrule(lr){2-4} \cmidrule(lr){5-7} \cmidrule(lr){8-10} \cmidrule(lr){11-13}
  & Signal & SVM & $\Delta$ & Signal & SVM & $\Delta$ & Signal & SVM & $\Delta$ & Signal & SVM & $\Delta$ \\
\midrule
\multicolumn{13}{l}{\textbf{inception\_v3}} \\
\midrule
Mixed\_7c & \textbf{1.000} & 0.420 & \textcolor{green!40!black}{+0.580} & \textbf{1.000} & 0.708 & \textcolor{green!40!black}{+0.292} & \textbf{0.963} & 0.742 & \textcolor{green!40!black}{+0.222} & \textbf{1.000} & \textbf{0.942} & \textcolor{green!40!black}{+0.058} \\
Mixed\_7b & \textbf{1.000} & \textbf{0.876} & \textcolor{green!40!black}{+0.124} & \textbf{0.976} & 0.689 & \textcolor{green!40!black}{+0.286} & \textbf{0.843} & 0.605 & \textcolor{green!40!black}{+0.238} & \textbf{0.984} & \textbf{0.773} & \textcolor{green!40!black}{+0.211} \\
Mixed\_7a & \textbf{1.000} & \textbf{0.960} & \textcolor{green!40!black}{+0.040} & \textbf{0.963} & \textbf{0.764} & \textcolor{green!40!black}{+0.199} & \textbf{0.784} & \textbf{0.792} & \textcolor{red!40!black}{-0.008} & \textbf{0.787} & \textbf{0.913} & \textcolor{red!40!black}{-0.125} \\
Mixed\_6e & \textbf{1.000} & \textbf{0.946} & \textcolor{green!40!black}{+0.054} & \textbf{0.954} & \textbf{0.789} & \textcolor{green!40!black}{+0.166} & 0.237 & 0.467 & \textcolor{red!40!black}{-0.230} & 0.438 & 0.722 & \textcolor{red!40!black}{-0.283} \\
Mixed\_6d & \textbf{1.000} & \textbf{0.947} & \textcolor{green!40!black}{+0.053} & 0.720 & 0.749 & \textcolor{red!40!black}{-0.029} & 0.257 & 0.306 & \textcolor{red!40!black}{-0.049} & 0.665 & \textbf{0.810} & \textcolor{red!40!black}{-0.145} \\
\midrule
\multicolumn{13}{l}{\textbf{resnet50}} \\
\midrule
layer4.2 & \textbf{1.000} & \textbf{0.913} & \textcolor{green!40!black}{+0.087} & \textbf{1.000} & \textbf{0.762} & \textcolor{green!40!black}{+0.238} & \textbf{0.955} & \textbf{0.833} & \textcolor{green!40!black}{+0.122} & \textbf{1.000} & \textbf{0.950} & \textcolor{green!40!black}{+0.050} \\
layer4.1 & \textbf{1.000} & \textbf{0.944} & \textcolor{green!40!black}{+0.056} & \textbf{0.998} & 0.735 & \textcolor{green!40!black}{+0.263} & \textbf{0.906} & 0.646 & \textcolor{green!40!black}{+0.260} & 0.662 & \textbf{0.932} & \textcolor{red!40!black}{-0.269} \\
layer4.0 & \textbf{1.000} & \textbf{0.962} & \textcolor{green!40!black}{+0.038} & \textbf{0.989} & \textbf{0.756} & \textcolor{green!40!black}{+0.233} & \textbf{0.804} & 0.619 & \textcolor{green!40!black}{+0.186} & 0.631 & \textbf{0.795} & \textcolor{red!40!black}{-0.164} \\
layer3.5 & \textbf{1.000} & \textbf{1.000} & +0.000 & \textbf{0.903} & 0.678 & \textcolor{green!40!black}{+0.224} & \textbf{0.783} & 0.567 & \textcolor{green!40!black}{+0.216} & 0.352 & 0.540 & \textcolor{red!40!black}{-0.188} \\
layer3.4 & \textbf{1.000} & \textbf{1.000} & +0.000 & \textbf{0.885} & 0.708 & \textcolor{green!40!black}{+0.176} & 0.650 & 0.595 & \textcolor{green!40!black}{+0.055} & 0.244 & 0.443 & \textcolor{red!40!black}{-0.199} \\
\midrule
\multicolumn{13}{l}{\textbf{densenet121}} \\
\midrule
denselayer16 & 0.200 & 0.515 & \textcolor{red!40!black}{-0.315} & \textbf{0.896} & \textbf{0.870} & \textcolor{green!40!black}{+0.025} & \textbf{0.935} & 0.627 & \textcolor{green!40!black}{+0.308} & \textbf{0.975} & \textbf{0.898} & \textcolor{green!40!black}{+0.078} \\
denselayer15 & \textbf{0.999} & 0.711 & \textcolor{green!40!black}{+0.288} & 0.548 & \textbf{0.831} & \textcolor{red!40!black}{-0.283} & \textbf{0.929} & 0.700 & \textcolor{green!40!black}{+0.229} & 0.744 & \textbf{0.775} & \textcolor{red!40!black}{-0.031} \\
denselayer14 & \textbf{0.964} & \textbf{0.844} & \textcolor{green!40!black}{+0.119} & \textbf{0.793} & \textbf{0.818} & \textcolor{red!40!black}{-0.026} & \textbf{0.962} & 0.694 & \textcolor{green!40!black}{+0.268} & \textbf{0.950} & \textbf{0.826} & \textcolor{green!40!black}{+0.125} \\
denselayer13 & 0.529 & \textbf{0.755} & \textcolor{red!40!black}{-0.226} & \textbf{0.888} & \textbf{0.846} & \textcolor{green!40!black}{+0.042} & \textbf{0.923} & 0.602 & \textcolor{green!40!black}{+0.320} & 0.705 & \textbf{0.781} & \textcolor{red!40!black}{-0.076} \\
denselayer11 & \textbf{1.000} & \textbf{0.788} & +\textcolor{green!40!black}{0.212} & \textbf{0.947} & \textbf{0.848} & \textcolor{green!40!black}{+0.099} & \textbf{0.877} & 0.628 & \textcolor{green!40!black}{+0.250} & \textbf{0.992} & \textbf{0.890} & \textcolor{green!40!black}{+0.103} \\
\midrule
\end{tabularx}
\caption{Layer-wise agreement scores for Signal and SVM classifiers. Signal and SVM results are shown side-by-side for each dataset to enable direct comparison. Values greater than 0.75 are highlighted in bold.}
\label{tab:layer-agreement-vision}
\end{table*}
\subsubsection{Validating the Concept Direction}
\label{sec:Validating the Concept Direction}
The results for TCAV testing on CelebA~\autoref{ds:celeba} are summarized in~\autoref{tab:celeba-scores-non-blond}.
It can be observed that the TCAV scores for the \textbf{necktie} concept for the \textbf{non-blonde} class are near one. It is worth noting that instead of using a necktie and random samples, we use samples with and without a necktie for the concept extraction.
Therefore, based on the high TCAV scores for the necktie, we can infer that the model will decrease the score for the non-blond class if there is an absence of the necktie concept in the input samples.
Both the SVM and the SignalCAV\footnote{The original paper~\cite{pahde2022navigating} details the efficacy of the SignalCAV.}
 have identified the right concept direction.
Therefore, it is safe to conclude that SignalCAV indeed captures the right concept direction, and its results can be trusted equally.

\subsubsection{Inter-Layer TCAV Agreement}
\label{sec:Alignment of TCAV Scores}
To evaluate the alignment of the TCAV scores, we compute the layer-wise TCAV agreement scores,~\autoref{metric:agreement}, between the TCAV scores from the penultimate layer of each network and the preceding layers. 
The TCAV agreement scores for the last five layers of InceptionV3, DenseNet121, and Resnet50 are presented in~\autoref{tab:layer-agreement-vision}. 
The results are computed by applying TCAV testing on CelebA, Simple Concept Dataset, ISIC-2019, and ImageNet datasets.
The results from the Wikipedia Toxicity dataset are presented in~\autoref{tab:layer-agreement-nlp}.

Our concept library consists of $28$ unique concepts. These results are obtained from a total of $50$ different class-concept combinations. The details of the concept curation per dataset, including the choice of positive, negative, and evaluation samples is described in the~\autoref{apx:concepts_ds}.
\textit{The raw TCAV scores and the code base for these experiments are given in the supplementary materials. }

The results indicate that TCAV scores from the layers preceding the penultimate layer generally agree with those at the penultimate layer. For the case of vision models~\autoref{tab:layer-agreement-vision}, we choose the \texttt{average pooling} as the penultimate layer. 
However, in the case of a text-classification model, Roberta-Base, the default classification head is not affine and contains nonlinear activations. 
Therefore, to fulfill the assumption of~\autoref{theorem:fast-penul}, we use the \texttt{dropout layer} of the classification head as the penultimate layer, since there is no nonlinearity after this layer.
Using this method of layer selection, E-TCAV can be applied to any model whose default classification head is nonlinear.

Though the exact degree of agreement varies with the dataset and network architecture, it is predominantly greater than $0.75$ till $4^{th}$ layer before the penultimate one, indicating a strong structural alignment in the final block of neural networks.

Secondly, we observe that the use of SignalCAV enhances the inter-layer agreement. For most of the cases, the agreement delta, presented in the third columns of~\autoref{tab:layer-agreement-nlp} and~\autoref{tab:layer-agreement-vision}, is positive, indicating that the agreement score for SignalCAV is more than that of SVM.

However, this trend does not extend to the case of ImageNet, where using SVM results in increased alignment of TCAV scores.

We hypothesize that this is likely because the semantic alignment is dependent on the complexity of the problem/dataset, which dictates the geometry of the resulting decision boundaries.
And since the SignalCAV effectively reduces to the difference of means~\autoref{sec:Signal's Statistical Superiority}, it might be the case that the concept activation vector obtained in this case is inferior to the one obtained by the SVM classifier.

However, it is worth noting that the E-TCAV framework is, in principle, agnostic to the choice of latent classifier. Since the SVM classifier yields high TCAV agreement scores on ImageNet, E-TCAV remains applicable in complex settings; in such cases, we recommend using SVM instead of SignalCAV.

\begin{figure}[t!]
    \centering
    \includegraphics[width=1\linewidth]{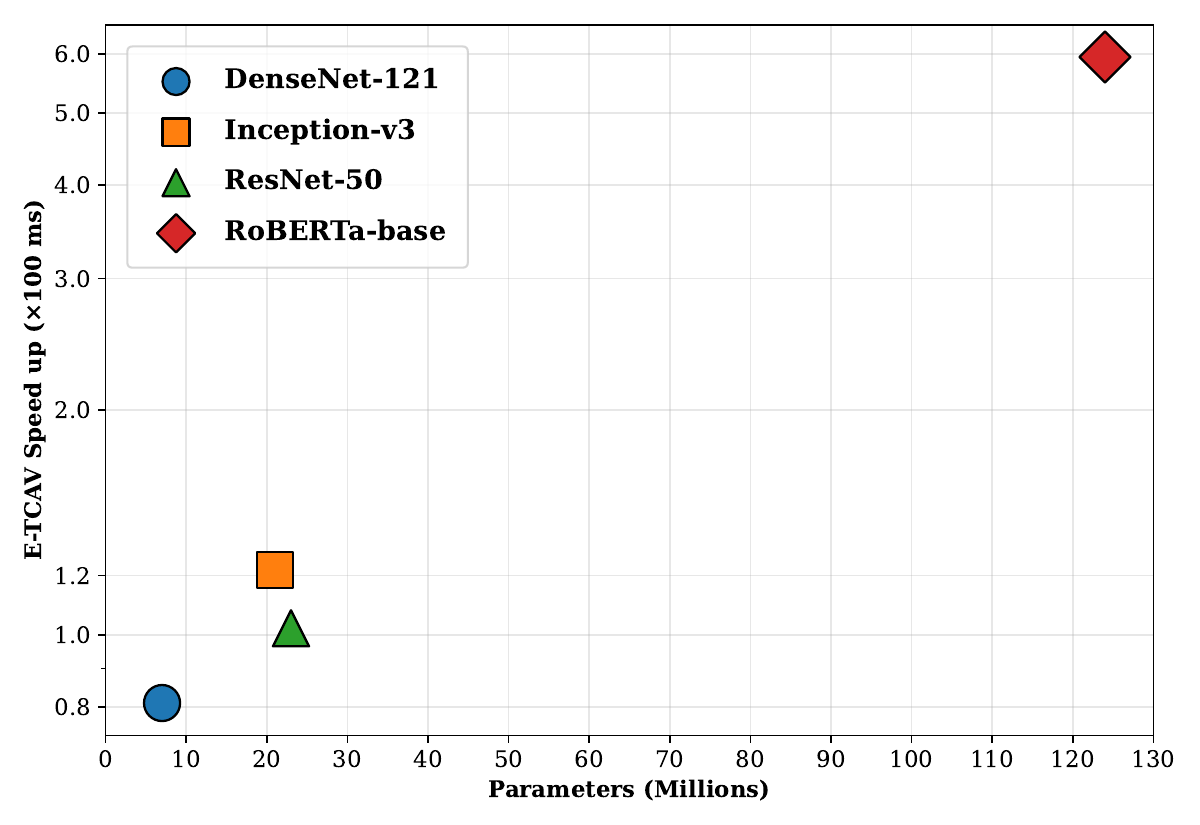}
    \caption{Linearly scaling E-TCAV speedup. The x-axis is the number of parameters; the y-axis represents the runtime reduction achieved by the proposed E-TCAV framework.}
    \label{fig:alignment_summary}
\end{figure}
\subsubsection{Efficiency of E-TCAV}
To numerically quantify the benefits of using the penultimate layer as the proxy, we perform the standard TCAV testing using various pretrained models, across multiple layers, on the CelebA dataset~\autoref{ds:celeba}. We then perform the TCAV testing on the penultimate layer of each network for the same concepts while using the definition provided in~\autoref{sec:theorem-fast-proxy}. We use the same latent classifiers and the hardware for both standard and proposed implementations.

The~\autoref{tab:layer-speedup} shows the percentage increase in the computational efficiency relative to the standard implementation using the proposed method, calculated as $( TIME_{TCAV} - TIME_{E-TCAV} )/ TIME_{TCAV}$.
Though the gain depends on the network architecture and the number of evaluation samples, it tends to increase as we approximate the TCAV scores for deeper layers, since the size of the latent space increases with an increase in depth, which needs more calculations for both the extraction of concept activation vector and the calculation of directional derivatives.

The wall clock comparison between E-TCAV and traditional TCAV is shown in Fig. TODO. The results are obtained by analyzing the runtime of both techniques when applied to the penultimate layers of the respective models. The a-axis is the number of parameters of the model, and the y-axis is the runtime difference ($TIME_{TCAV} -TIME_{E-TCAV} $).
As projected by the complexity analysis given in~\autoref{sec:complexity-analysis}, the gain can be seen growing linearly with the number of parameters.

\begin{table}[t!]
\centering
\small
\begin{tabularx}{\linewidth}{Xc}
\toprule
Layer & Relative Speedup \\
\midrule
\multicolumn{2}{l}{\textbf{resnet50}} \\
\midrule
layer4.0 & 31.85\% \\
layer4.1 & 30.75\% \\
layer4.2 & 12.53\% \\
avgpool & 23.43\% \\
\midrule
\multicolumn{2}{l}{\textbf{densenet121}} \\
\midrule
denselayer14 & 20.02\% \\
denselayer15 & 22.84\% \\
denselayer16 & 23.48\% \\
avgpool & 17.99\% \\
\midrule
\multicolumn{2}{l}{\textbf{inception\_v3}} \\
\midrule
Mixed\_7a & 28.45\% \\
Mixed\_7b & 29.30\% \\
Mixed\_7c & 28.92\% \\
avgpool & 16.39\% \\
\bottomrule
\end{tabularx}
\caption{Relative execution-time speedup of E-TCAV compared to default TCAV across increasing network depth.}
\label{tab:layer-speedup}
\end{table}

\subsubsection{Signal's Statistical Superiority}
\label{sec:Signal's Statistical Superiority}
To demonstrate the benefits of using SignalCAV we conduct further analysis by comparing different statistics derived from the TCAV testing using each of the classifiers. The results are presented in~\autoref{tab:master-std-pval}.
It is clear to observe that the average of the standard deviation of the SignalCAV is quite below that of the SVM.
Similarly, the average p-values obtained after double-sided t-testing are lower for the SignalCAV as compared to the SVM.

The stability of SignalCAV can be attributed to the relatively deterministic process of its calculation. 
The calculation of the concept activation vector is dependent on two factors: the non-concept samples used and the convergence behavior of the latent classifier.

The SVM objective is defined by the samples closest to the decision boundary and hence is very sensitive to the specific geometry of the randomly sampled non-concept set.
However, the SignalCAV reduces to the difference of the means of the activations in case of binary labels~\cite{pahde2022navigating}.
\[\boldsymbol{v}_C \propto \mathbb{E}[\boldsymbol{a}_{concept}] - \mathbb{E}[\boldsymbol{a}_{random}]\]
As a global statistical measure, the mean is a more stable estimator when compared to the support margins, and hence it provides a smoother representation resulting in low variance.
\begin{table}[t!]
\centering
\small
\setlength{\tabcolsep}{5pt}
\begin{tabularx}{\linewidth}{Xcccccc} 
\toprule
Dataset & \multicolumn{3}{c}{std} & \multicolumn{3}{c}{p-val} \\
\cmidrule(lr){2-4} \cmidrule(lr){5-7}
 & Signal & SVM & $\Delta$ & Signal & SVM & $\Delta$ \\
\midrule
\multicolumn{7}{l}{\textbf{resnet50}} \\
\midrule
CelebA & 0.000 & 0.042 & \textcolor{green!40!black}{-0.042} & 0.000 & 0.000 & -0.000 \\
ISIC & 0.111 & 0.268 & \textcolor{green!40!black}{-0.157} & 0.029 & 0.056 & \textcolor{green!40!black}{-0.028} \\
ImageNet & 0.047 & 0.134 & \textcolor{green!40!black}{-0.087} & 0.012 & 0.052 & \textcolor{green!40!black}{-0.041} \\
SCDB & 0.123 & 0.120 & \textcolor{red!40!black}{+0.004} & 0.051 & 0.031 & \textcolor{red!40!black}{+0.020} \\
\midrule
\multicolumn{7}{l}{\textbf{densenet121}} \\
\midrule
CelebA & 0.003 & 0.149 & \textcolor{green!40!black}{-0.146} & 0.000 & 0.032 & \textcolor{green!40!black}{-0.032} \\
ISIC & 0.229 & 0.314 & \textcolor{green!40!black}{-0.085} & 0.081 & 0.170 & \textcolor{green!40!black}{-0.088} \\
ImageNet & 0.079 & 0.127 & \textcolor{green!40!black}{-0.047} & 0.006 & 0.015 & \textcolor{green!40!black}{-0.010} \\
SCDB & 0.070 & 0.132 & \textcolor{green!40!black}{-0.062} & 0.009 & 0.034 & \textcolor{green!40!black}{-0.024} \\
\midrule
\multicolumn{7}{l}{\textbf{inception\_v3}} \\
\midrule
CelebA & 0.000 & 0.082 & \textcolor{green!40!black}{-0.082} & 0.000 & 0.013 & \textcolor{green!40!black}{-0.013} \\
ISIC & 0.123 & 0.251 & \textcolor{green!40!black}{-0.127} & 0.025 & 0.115 & \textcolor{green!40!black}{-0.091} \\
ImageNet & 0.040 & 0.120 & \textcolor{green!40!black}{-0.079} & 0.000 & 0.055 & \textcolor{green!40!black}{-0.055} \\
SCDB & 0.106 & 0.095 & \textcolor{red!40!black}{+0.011} & 0.026 & 0.022 & \textcolor{red!40!black}{+0.003} \\
\midrule
\multicolumn{7}{l}{\textbf{roberta-base}} \\
\midrule
wiki & 0.000 & 0.162 & \textcolor{green!40!black}{-0.162} & 0.000 & 0.032 & \textcolor{green!40!black}{-0.032} \\
\bottomrule
\end{tabularx}
\caption{Comparison of standard deviations and p-values of concept scores across models and datasets using Signal and SVM classifiers. In this specific table, lower value is better, and as we can observe Signal has a consistently lower standard deviation and p-value across the majority of test cases.}
\label{tab:master-std-pval}
\end{table}

\section{Conclusion}
In this study, we addressed the variability and computational cost of TCAV scores. 
We showed that both inter-layer variability and differences across multiple TCAV runs are largely influenced by the choice of latent classifier, with SignalCAV producing higher inter-layer agreement and lower variance. 
Analytically, we demonstrated that TCAV scores are degenerate at the penultimate layer for affine classifiers. 
Leveraging this, we proposed the E-TCAV framework, which uses the penultimate layer as a proxy for preceding layers to compute TCAV scores, achieving computational speed-ups that scale linearly with the number of evaluation samples and the network size.

The proposed E-TCAV framework provides several practical benefits. First, it reduces the computational cost of standard TCAV, making it useful not only for post-hoc interpretability but also for concept-guided training, where efficient per-batch computation can compound into substantial savings. Second, it offers a principled way to guide layer selection in general settings: our results indicate that the final block consistently preserves meaningful semantic information, making penultimate a reliable target for probing.

Experiments on multiple datasets, CelebA, ISIC, SCDB, ImageNet, and the Wikipedia Toxicity, demonstrate that the conceptual presence or absence is consistent across multiple layers of networks. The diversity of the datasets implies that results apply to both computer vision and natural language domains.

We further observe that the TCAV scores increasingly diverge from those of the penultimate layer in deeper layers of the networks. Consequently, E-TCAV is most reliable when applied to the final block of the network, where semantic representations are expected to be consistent. Future work should investigate approximation strategies beyond direct layer substitution, with the goal of improving robustness while preserving the computational advantages of E-TCAV.
\clearpage
\bibliographystyle{named}
\bibliography{ref}
\appendix

\clearpage

\section{Datasets \& Concepts Curation}
\label{apx:concepts_ds}
\subsection{CelebA}
\label{apx:celeba}
The original CelebA dataset is first divided into train, validation, and test splits. 
A train split is used to train the model on the base task.
Each concept is represented by $200$ images extracted from the validation split.
While evaluation samples used in the calculation of directional sensitivities are extracted from the test split.
The CelebA dataset has been used in the following two different configurations in this work, each involving a different based task.

\subsubsection{Hair Color Classification}
In ~\autoref{sec:Validating the Concept Direction}, we used this configuration to check if the classifiers capture the true concept direction.
We constructed a dataset to detect if the hair color of the subject is blond or not. We used the \textbf{Blond\_Hair} attribute to identify the relevant classes.
We then performed the TCAV testing on the \textbf{Necktie} concept for both of the classes. The probe dataset was constructed as follows: 

\begin{itemize}
    \item \textbf{Positive Samples}: Randomly sampling with replacement from the validation split, where \texttt{Wearing\_Necktie} == 1.
    \item \textbf{Negative Samples}: Randomly sampling with replacement from the validation split, where \texttt{Wearing\_Necktie} == -1.    
\end{itemize}

The evaluation samples for each class were drawn from the test set, similar to how it was done in the training set.

\subsubsection{Gender Classification}
For evaluating the inter-layer agreements of the TCAV scores in ~\autoref{sec:Alignment of TCAV Scores}, in addition to the above configuration, we included another configuration of the CelebA with more concepts to have generalizable results.
We constructed a dataset for the gender classification of the subject. We used the \textbf{Male} attribute to identify the relevant classes.
We then performed the TCAV testing on the concepts of \textbf{Bald}, \textbf{Mustache}, \textbf{Wearing\_Lipstick}, and \textbf{Wearing\_Necktie}.
Similar to the previous configuration, concepts were extracted from the validation split, where positive samples were identified by the \texttt{Attribute} == 1 and negative samples by \texttt{Attribute} == 1.

\subsection{ISIC}
\label{apx:isic}
For all of the experiments where the ISIC dataset~\cite{codella2017isbi,tschandl2018ham10000,hernandezperez2024bcn20000} was used, we trained the base models on the binary classification of \textbf{Melanoma (MEL)}.
The dataset was constructed in one versus all fashion (MEL vs 7 other classes).
The concepts were constructed using the \textbf{Derm7pt} dataset~\cite{kawahara2018seven}, which contains the sample-level annotations describing the nature of the image.
We construct eight concepts from the dermoscopic images provided in the dataset, and our technique is elaborated in~\autoref{table:isic-concept-mapping}. Each concept is represented by $70$ samples in this case.




\begin{table}[t]
\centering
\small
\setlength{\tabcolsep}{4pt}
\renewcommand{\arraystretch}{1.15}

\begin{tabularx}{\columnwidth}{
  >{\raggedright\arraybackslash}p{0.28\columnwidth}
  >{\raggedright\arraybackslash}p{0.28\columnwidth}
  >{\raggedright\arraybackslash}X
}
\toprule
\textbf{Concept Name} & \textbf{Derm7pt} & \textbf{Values} \\
\midrule

Regular Pigment Network
& \texttt{pigment network}
& \{\texttt{typical}\} \\

Irregular Pigment Network
& \texttt{pigment network}
& \{\texttt{atypical}\} \\

Regular Streaks
& \texttt{streaks}
& \{\texttt{regular}\} \\

Irregular Streaks
& \texttt{streaks}
& \{\texttt{irregular}\} \\

Regular Dots and Globules
& \texttt{dots and globules}
& \{\texttt{regular}\} \\

Irregular Dots and Globules
& \texttt{dots and globules}
& \{\texttt{irregular}\} \\

Typical Pigmentation
& \texttt{pigmentation}
& \{\texttt{diffuse regular}, \texttt{localized regular}\} \\

Atypical Pigmentation
& \texttt{pigmentation}
& \{\texttt{diffuse irregular}, \texttt{localized irregular}\} \\

Random Set
& \texttt{*}
& \{\texttt{absent}\} \\

\bottomrule
\end{tabularx}

\caption{Mapping between semantic concepts and annotation attributes in the Derm7pt dataset used to construct positive concept sets. Each concept is defined by a specific attribute in the original annotations and an associated set of categorical values treated as positive labels. A universal negative set is constructed by pooling samples in which the corresponding attribute is annotated as \texttt{absent}.}
\label{table:isic-concept-mapping}
\end{table}
\subsection{SCDB}
\label{apx:scdb}
The SCDB provides the concept images and the masks of the concepts present in each of the image which we use to construct the concept library. For experimentation, we use all available concepts, namely Cross, Ellipse, Hexagon, Line, Pentagon, Rectangle, Star, Starmarker, Triangle, and Tripod.
To perform TCAV testing for a given concept, we randomly choose $200$ samples of that concept, obtained after applying the provided concept masks on the given images. We construct 30 random sets per concept by uniformly sampling $200$ images for each of the sets from the remaining concepts.

\subsection{Wikipedia Toxicity}
\label{apx:wiki-toxic}
For the Wikipedia Toxicity dataset~\cite{davidson2017automated}, we first divided that dataset into train, validation, and test splits.
The model was trained on the binary classification of the text samples into \texttt{Toxic} or \texttt{Non-Toxic} class.
The dataset also assigns a binary value for various attributes of a given text sample. We use the \texttt{Obscene} and the \texttt{Insult} attribute to curate two different concepts from the test split of the dataset.
We also create a synthetic \texttt{neutral} concept by using Chat-GPT 4.1-mini, so as to create a non-toxic concept as well.
The negative samples were sampled from a random tweet dataset. And the evaluation samples for each class were drawn from the validation split.

\end{document}